\documentclass[conference]{IEEEtran}
\usepackage{times}

\usepackage[numbers]{natbib}
\usepackage{multicol}
\usepackage[bookmarks=true]{hyperref}

\usepackage{xcolor}
\usepackage{color}

\newcommand{\nn}[1]{\textcolor{black}{#1}}
\usepackage[normalem]{ulem} 

\usepackage{float}
\usepackage{tikz}
\usetikzlibrary{arrows.meta,positioning,calc}
\definecolor{skyblue}{RGB}{135,206,235} 
\usepackage{subcaption} 

\usepackage{tabularx}

\usepackage{algorithm}
\usepackage{algpseudocode}
\algrenewcommand\algorithmicrequire{\textbf{Input:}}
\algrenewcommand\algorithmicensure{\textbf{Output:}}

\usepackage{comment}
\usepackage{amsfonts}
\usepackage{amsmath}

\newtheorem{theorem}{Theorem}
\newtheorem{lemma}[theorem]{Lemma}

\begin{document}

\title{Optimal UGV–UAV Cooperative Partitioning and Inspection of Shortest Paths}

\IEEEoverridecommandlockouts


\author{
\authorblockN{Ninh Nguyen and Srinivas Akella}
\authorblockA{
Department of Computer Science\\
University of North Carolina at Charlotte,
Charlotte, NC 28223, USA\\
Email: {\tt\small \{tnguy248, sakella\}@charlotte.edu}
}
}



%

\maketitle

\begin{abstract}
We study cooperative shortest path planning for an unmanned ground vehicle (UGV) assisted by an unmanned aerial vehicle (UAV) scout in environments with unknown road blockages that are only discovered when a robot reaches the damaged point. This formulation generalizes the original Canadian Traveller Problem (CTP), which assumes a single ground vehicle and that the traversability status of all incident edges is revealed upon arrival at a vertex. We first analyze the case where the start and the goal are connected by $k$ disjoint paths, and prove that the worst-case competitive ratio $\rho$ for a single UGV is $2k-1$. With UAV assistance, and under the simplifying assumption of negligible initial transit and deadheading UAV costs, the ratio $\rho$ improves to $2(k-1)\frac{v_G}{v_G + v_A} + 1$, where $v_G$ and $v_A$ denote the UGV and UAV speed, respectively. To address general graphs and non-negligible UAV initial transit and deadheading costs, we present an optimal candidate-path partitioning algorithm that assigns path prefix inspection to the UGV and path suffix inspection to the UAV, and prove the optimality of the UAV inspection strategy on general graphs. We evaluate our algorithm by performing experiments on road networks from the world's 50 most populous cities with randomized blockage locations,
and show that the algorithm reduces UGV travel time, with larger improvements as the UAV speed increases.
\end{abstract}

\IEEEpeerreviewmaketitle

\section{Introduction}

Unmanned Ground Vehicles (UGVs) and Unmanned Aerial Vehicles (UAVs) are becoming essential tools in emergency response, logistics, and disaster relief~\cite{MunasinghePD24_UAVUGVReview, KashyapGMSD19, ChoudhurySKP22_arxiv}. In assessment and response scenarios after disasters such as earthquakes, floods, or wildfires, ground vehicles are tasked with transporting critical supplies or rescuing victims, while aerial robots perform reconnaissance and provide situational awareness. A central challenge in these environments is path planning under uncertainty; road conditions may be unknown in advance, and obstacles are often revealed only when a vehicle arrives at their location.  Guaranteeing safe and efficient navigation in such uncertain environments remains a pressing problem.

This challenge is closely related to the Canadian Traveller Problem (CTP)~\cite{PapadimitriouY91_BasicCTP}, where an agent must travel from a start vertex to a goal vertex in a graph where some edges may be blocked, and blockages are discovered only upon arrival. The stochastic CTP, where each edge is assumed to have a known probability of being blocked, has been widely studied~\cite{BarNoyS91_kCTP, NikolovaK08_StochasticCTP, BnayaFFMS21_RepeatedCTP, BnayaFS09_RemoteSensing}. In practice, however, prior knowledge of edge probabilities is rarely available or reliable in post-disaster environments. For the original CTP without probabilistic information, expected-value minimization is not applicable. Instead, researchers analyze competitive ratios~\cite{XuHSZZ09_CompetitiveAnalysis,Westphal08_kCTPAnalysis, BampisEX22_CTPPredictions} (the worst-case ratio between the cost of an online strategy and the shortest feasible path). Despite its practical importance, this problem has received comparatively less attention. 

In this paper, we study UGV–UAV cooperation for safe path planning in environments modeled similarly to the original CTP. However, the original CTP formulation makes two assumptions: edge status is revealed as soon as the traveller reaches a vertex incident to that edge, and the traveller is a single ground vehicle acting alone. In contrast, we consider a more realistic setting in which blockages are discovered only when a robot arrives at the damaged point of an edge, and path inspection can be shared between a UGV and a UAV. Since even the original CTP is computationally challenging, obtaining formal performance guarantees for UGV-UAV variants requires simplifying assumptions that make the analysis tractable.

Our theoretical analysis considers a simplified setting in which a start vertex and a goal vertex are connected by multiple disjoint paths. This isolates the difficulty caused by unknown blockages while enabling a competitive ratio analysis. In the single-agent case, an oblivious adversary may block all but the longest path near the goal, forcing the UGV to backtrack repeatedly. We prove that in this case the competitive ratio $\rho$ is $2k-1$, where $k$ is the number of disjoint paths. We  then introduce a cooperative strategy in which a UAV scout inspects potential paths as the UGV attempts traversal. Under the simplifying assumption that the initial transit cost for the UAV to reach a path before inspection, as well as the deadheading cost (the additional time for the UAV to travel between vertices of the path without performing inspection), are both small, we show that the competitive ratio $\rho$ improves to $2(k-1)\frac{v_G}{v_G + v_A} + 1,$ where $v_G$ and $v_A$ are the speeds of the UGV and UAV, respectively.

Motivated by the above theoretical result, we address the following question on general graphs: given a candidate shortest path selected by the UGV in a general graph, how should inspection of this path be divided between the UGV and UAV?
We present an optimal candidate-path partitioning algorithm that assigns path prefix inspection to the UGV and path suffix inspection to the UAV.   In realistic scenarios, both the UAV's initial transit cost and deadheading cost for the inspection plan must be considered. 
Hence, we present an efficient algorithm that guarantees a minimum travel time for UAV inspection, including the initial transit and deadheading costs, on general graphs.


The main contributions of this work are:
\begin{enumerate}

     \item  \textbf{Worst-case analysis of UGV-only and UGV--UAV path planning:} We analyze a simplified setting in which the start and goal are connected by \(k\) disjoint paths. For the UGV-only strategy, we provide a tight competitive-ratio bound of $2k-1$. We then show that UAV assistance improves the worst-case bound, with the improved ratio of $2(k-1)\frac{v_G}{v_G + v_A} + 1$ depending on the UGV and UAV speeds. 
    \item  \textbf{Optimal UGV-UAV inspection strategy:} Motivated by the theoretical result, we provide an optimal polynomial-time algorithm for  the subproblem of partitioning path inspection between the UGV and UAV in general graphs.
    \item \textbf{Simulation results:} We validate the optimal partition strategy on road networks from the 50 most populous cities with randomized blockage locations. 
    As the UGV:UAV speed ratio varies from 20:20 to 20:40, the cooperative strategy reduces UGV travel time relative to the UGV-only baseline by 
    \(14.30\%\) to \(17.97\%\) on average on large maps, with larger improvements as the UAV speed increases.
\end{enumerate}

The remainder of this paper is organized as follows:
Section~\ref{sec:related} reviews prior work on the Canadian Traveller Problem and cooperative multi-robot path planning under uncertainty.
Section~\ref {sec:formulation} presents our \nn{general UGV--UAV safe-path planning problem} formulation, including the graph model, edge status uncertainty, and subproblems.
\nn{Section~\ref{sec:analysis-ugv-only} introduces the simplified disjoint-paths model used for worst-case analysis and derives the UGV-only competitive ratio bound. Section~\ref{sec:analysis-ugv-uav} analyzes the cooperative UGV--UAV case under the same model.}
Section~\ref{sec:partition} introduces our optimal partitioning algorithm for UAV-assisted inspection in general graphs.
Section~\ref{sec:algorithms} describes the algorithm implementation with a complexity analysis.
Section~\ref{sec:results} reports simulation results on OpenStreetMap road networks with randomized blockages, and Section~\ref{sec:conclusion} concludes with a summary and directions for future work.

\section{Related Work}
\label{sec:related}


\subsection{Canadian Traveller Problem (CTP)}

The Canadian Traveller Problem (CTP), introduced by Papadimitriou and Yannakakis~\cite{PapadimitriouY91_BasicCTP}, models online path planning in graphs with unknown edge availability. In this problem, a traveler must move from a start vertex $s$ to a goal vertex $g$, where some edges may be blocked, and the status of each edge is revealed only when the traveler attempts to use it. For general graphs, it is PSPACE-complete to achieve a bounded competitive ratio for this problem, and the stochastic version has been shown to be \#P-hard.


The CTP can be viewed as a stochastic shortest path problem with recourse~\cite{PolychronopoulosT96_SSPPR}; recourse refers to the ability to adjust or modify a decision when new information is revealed.
Many CTP strategies repeatedly update the graph when blockages are discovered (e.g., by deleting blocked edges) and then recompute shortest paths from the traveler’s current location to the goal $g$. This connects practical CTP implementations to dynamic graph algorithms for shortest-path computation under edge updates~\cite{DemetrescuEGI10, HanauerHS22}.

Several important extensions of the original CTP have been studied~\cite{BarNoyS91_kCTP, BnayaFS09_RemoteSensing, FriedSBW13_CTPComplexity, Westphal08_kCTPAnalysis}. One of them is $k$-CTP~\cite{BarNoyS91_kCTP}, where no more than $k$ edges in the graph may be blocked. Westphal~\cite{Westphal08_kCTPAnalysis} showed that no deterministic online strategy can achieve a competitive ratio smaller than $2k+1$, and the lower bound for any randomized online algorithm is $k+1$. Later, Bender and Westphal~\cite{BenderW12_OptimalRandomkCTP} proved that even if all $s$-$g$ paths are node-disjoint paths, the lower bound is still the same. Fried et al.~\cite{FriedSBW13_CTPComplexity} provided a detailed analysis of the computational complexity of various CTP variants, consolidating hardness results across multiple formulations.
All the above studies of CTP restrict attention to a single ground vehicle. In contrast, our work explores extensions of the CTP involving a heterogeneous cooperating team of a UGV and a UAV, and introduces a more realistic sensor observation model where blockages are discovered only when a robot reaches the damaged point of an edge.

\subsection{UGV-UAV Cooperation}
UGV-UAV cooperation~\cite{MunasinghePD24_UAVUGVReview, KashyapGMSD19, ChoudhurySKP22_arxiv} has been explored in robotics for tasks such as target tracking, exploration, and search and rescue, but not in the context of the original CTP. Bhadoriya et al.~\cite{BhadoriyaRDCM24} studied assisted UGV-UAV path planning in stochastic networks, where the UAV inspects the most critical edges with candidates from the UGV’s $k$-shortest paths. Their model assumes travel costs are random variables, but that all edges remain passable. This may not reflect real-world scenarios where some edges are completely impassable.
CTP with remote sensing considers the use of a remote sensor to minimize the expected travel cost of the vehicle~\cite{BnayaFS09_RemoteSensing} but similarly relies on prior knowledge of edge-damage probabilities, which is often unavailable in post-disaster activities.
Our recent work~\cite{NguyenA26} studied the Dynamic UGV-UAV Cooperative Path Planning problem, in which one UGV is assisted by one or more UAVs in an uncertain road network with potentially impassible edges. It proposed and evaluated several UAV inspection strategies, including a bidirectional strategy, and showed that UAV inspection can reduce the UGV's travel time. However, it was primarily an empirical study and did not provide worst-case competitive ratio guarantees.



\subsection{Line Coverage and Inspection Problems}

When the UGV identifies a candidate shortest path to the goal, the UAV can assist by inspecting edges along that path. We need to plan how to traverse all required edges of a path efficiently. Such problems are related to arc routing and line coverage problems~\cite{CorberanL14_ArcRouting, AgarwalA24}, 
which studies how to cover all required edges in a graph with minimum cost.

While these formulations address efficient edge coverage, they do not consider heterogeneous cooperation or the partitioning of inspection tasks between multiple agents. Since our subproblems are specialized to the case where the required edges form a simple path, we show that inspection can be divided optimally between the UGV and UAV in polynomial time.

\section{Problem Formulation}
\label{sec:formulation}

The task is for the UGV to travel from start vertex $s$ to goal vertex $g$ in a graph $G$ with UAV assistance to minimize the UGV's total travel time under unknown edge traversability conditions. For a candidate shortest path $P_i=\langle u_0,\dots,u_m=g\rangle$ at planning step $i$, we reduce the problem into two subproblems:
\begin{enumerate}
    \item UGV-UAV Path Partitioning (Subproblem 1): Choose a vertex $u_j$ that partitions $P_i$ into a UGV prefix \nn{$P_i^{\text{UGV}}=\langle u_0,\dots,u_j \rangle$} and a UAV suffix \nn{$P_i^{\text{UAV}}= \langle u_j,\dots,u_m \rangle$}, with the UGV traversing the prefix while the UAV inspects the suffix. 
    \item UAV Inspection of a Subpath (Subproblem 2): Given $u_j$ and the suffix, compute an optimal UAV plan (including initial transit and necessary deadheading) that inspects all suffix edges in minimum time. 
\end{enumerate}


\subsection{Graph Model}
We represent the road network as a weighted undirected graph $G = (V, E, w)$, where $V$ is the set of vertices, $E \subseteq V \times V$ is the set of edges (i.e., road segments), and $w:E \to \mathbb{R}^+$ assigns each edge a positive length. A start vertex $s \in V$ and a goal vertex $g \in V$ are designated. The length of the shortest feasible path from $s$ to $g$ in $G$ is denoted by $L^*$.

\subsection{UGV and UAV Motion}
The UGV is restricted to move along edges of the graph $G$ at constant speed $v_G > 0$.
The UAV, on the other hand, flies in continuous 2D space at a constant altitude with a constant speed $v_A > 0$, and can deadhead (i.e., fly directly) between any pair of vertices, and inspect edges by flying over them.

\subsection{Edge Status and Inspection}
Each edge $e \in E$ may be either \textit{open} or \textit{blocked}. The status of an edge is unknown until it is inspected by either the UGV or the UAV. An edge can be blocked at any point along its length.

\paragraph{UGV traversal}
If the UGV is at vertex $u$ and attempts to traverse an edge $e=(u,v)$ that is blocked, it can only discover the blockage after reaching the obstruction. In this case, the UGV must return to $u$ before finding a new route. If $d$ is the distance from $u$ to the obstruction along $e$, the travel cost of this failed attempt is $2d$.

\paragraph{UAV inspection}
The UAV inspects an edge by flying over its full length, and detects any blockage during the flight. Whether the edge is blocked or open, we assume the UAV completes traversal of the edge.  Upon reaching the opposite vertex, the UAV communicates the edge status to the UGV, enabling the UGV to reroute without physically attempting traversal of a blocked edge.


The cooperative path planning problem for a UGV-UAV team in general graphs with blockages is challenging because the set of damaged edges and the positions of the damaged points are unknown in advance, and every inspection decision changes the subsequent positions of both robots. Our online algorithm therefore repeatedly computes a candidate shortest path to the goal using the currently known graph. For each candidate path, the UGV and UAV cooperate to inspect the path: the UGV is assigned a prefix of the path, while the UAV is assigned a suffix. This motivates the following two subproblems.

\subsection{Subproblem 1: UGV-UAV Path Partitioning}
\label{subsec:split-path}

Suppose that at planning step \(i\), a blockage has been discovered on the current UGV candidate path, either by the UGV or by the UAV. The discovered blocked edge is removed from the currently known graph, and the UGV replans from its current vertex. Let \(x_G^i\) denote the UGV's current vertex at this replanning step, and let $P_i = \langle u_0=x_G^i,u_1,u_2,\dots,u_m=g \rangle$ be the new candidate shortest path from \(x_G^i\) to the goal \(g\), with edge sequence \nn{$E_{P_i} = \{e_r\}_{r=1}^{m}$ with $e_r = (u_{r-1},u_r)$}.
In the disjoint-path worst-case analysis, the UGV returns to the start before attempting the next path and \(x_G^i=s\). In the general graph algorithm, however, \(x_G^i\) may be any vertex reached by the UGV during execution.


We partition the path $P_i$ between the UGV and UAV by selecting a split index $j \in \{0,\dots,m\}$ and assigning the \textit{path prefix} \nn{$P_i^{\text{UGV}}=\langle u_0,\dots,u_j \rangle$} to the UGV, and the \textit{path suffix} \nn{$P_i^{\text{UAV}}= \langle u_j,\dots,u_m \rangle$} to the UAV.

The UGV traverses and inspects the prefix of $P_i$, while the UAV inspects its suffix.  
Define the prefix length $ S(j) = \sum_{r=1}^j w(e_r)$, with $S(0)=0$.  
Let $\tau_G^i$ denote the time before the UGV can begin traversing the newly selected path \(P_i\), such as the time needed to return from a discovered damage point to the most recently visited vertex or to wait for the UAV to finish its current edge traversal and communicate inspection results.
Then the total UGV time to inspect its assigned prefix is $ T_{\text{UGV}}(j) = \tau_G^i + \frac{S(j)}{v_G}$. 
Let $T_{\text{UAV}}^{\star}(j)$ denote the minimum inspection time for the UAV to cover the suffix $P_i^{\text{UAV}}(j)$ (which is computed by solving subproblem 2).

To summarize, for path $P_i$, we seek the split that balances the two inspection tasks:
\[
j^\star \;=\; \arg\min_{j \in \{0,\dots,m\}} 
\max\{T_{\text{UGV}}(j),\,T_{\text{UAV}}^{\star}(j)\}.
\]

Figure~\ref{fig:uav-ugv-path}  shows an example of subproblem~1 when we need to find a vertex $u_j^{\star}$ to divide the path $P_i$, and assign the path prefix to the UGV and the path suffix to the UAV.

\begin{figure}[ht]
    \centering
    \begin{tikzpicture}[scale=0.8,>=stealth, node distance=1cm]
        \tikzstyle{vertex}=[circle,draw,fill=skyblue!40,minimum size=8mm,inner sep=0pt]
        
        \node[vertex] (UGV) at (2,4) {$x_G$};
        \node[draw=none, fill=none, font=\scriptsize,below=12pt] (UGV_label) at (2,4) {\textbf{UGV}};

        \node[vertex] (UAV) at (5,4) {$x_A$};
        \node[draw=none, fill=none, font=\scriptsize,below=12pt] (UAV_label) at (5,4) {\textbf{UAV}};
        
        \node[vertex] (u0)  at (0,3) {$u_0$};
        \node[vertex] (u1)  at (1.2,1.5) {$u_1$};
        \node[vertex] (uj)  at (2.7,1) {$u_j^*$};
        \node[vertex] (um2) at (4.2,1) {$u_{m-2}$};
        \node[vertex] (um1) at (6,2) {$u_{m-1}$};
        \node[vertex] (um)  at (7,3) {$u_m$};

        \draw[thick,black]  (UGV) to[out=150,in=100] (u0);
        \draw[thick,red] (u0) -- (u1);
        \draw[thick,red,dashed] (u1) -- (uj);

        \draw[thick,blue,dashed] (uj) -- (um2);
        \draw[thick,blue] (um2) -- (um1) -- (um);
        \draw[thick,black] (um) to[out=90,in=0] (UAV);

    \end{tikzpicture}
    \caption{Illustration of UAV and UGV cooperative inspection of a path $\langle u_0,...u_m\rangle$.
    The UGV path (red) and UAV path (blue) share a split point $u_j^*$. Dashed lines represent a path segment that may contain multiple nodes. }
    \label{fig:uav-ugv-path}
\end{figure}

\subsection{Subproblem 2: UAV Inspection of a Subpath}
\label{subsec:subprob2}

Once a split index $j$ is chosen, the UAV is assigned the suffix $ P_i^{\text{UAV}} = \langle u_j, u_{j+1}, \dots, u_m = g\rangle$, with edge sequence $E_{\mathcal{R}} = \{(u_{r-1},u_r)\}_{r=j+1}^{m}$.
Let $\ell = m-j$ and relabel this path suffix as a path \nn{ $ \mathcal{R} = \langle v_0 =u_j, v_1, \dots, v_{\ell}=u_m=g \rangle$}, with edges $E_{\mathcal{R}} = \{(v_{r-1}, v_r)\}_{r=1}^\ell$. $E_\mathcal{R}$ is the set of required edges that the UAV must inspect to reveal their status so that the UGV can continue or reroute its path.


\paragraph{Motion graph for the UAV}
Since the UAV can fly directly between any two vertices, its permitted motion is represented by a complete graph
$ G_A = (V_A, E_A), V_A = \{x_A, v_0, v_1, \dots, v_\ell\} $, where $x_A \in V$ is the UAV’s current position.
Each edge $(p,q) \in E_A$ has a weight
\[
w_A(p,q) = 
\begin{cases}
    w(p,q), & \text{if } (p,q) \in E, \\[6pt]
    d(p,q), & \text{if } (p,q) \notin E,
\end{cases}
\]
where $w(p,q)$ is the original road-network weight in $G$, and $d(p,q)$ is the Euclidean distance between $p$ and $q$. The time for the UAV to traverse $(p,q)$ is $w_A(p,q)/v_A$.

\paragraph{Required edges}
The UAV must inspect all edges of the path suffix $E_{\mathcal{R}}$, each of which must be flown over completely.

\paragraph{Cost components}
The UAV’s total inspection time consists of three parts:
\begin{enumerate}
    \item \textbf{Initial transit:} Transit time from current position $x_A$ to a chosen start vertex $v_q \in \{v_0,\dots,v_\ell\}$,
    \[
    T_{\mathrm{init}}(x_A,v_q) = \frac{w_A(x_A,v_q)}{v_A}.
    \]
    \item \textbf{Inspection cost:} Time to inspect every required edge in $E_\mathcal{R}$, giving
    \[
    T_{\mathrm{inspect}} = \frac{L_{\mathcal{R}}}{v_A}, 
    \quad where \quad L_{\mathcal{R}} = \sum_{r=1}^{\ell} w(v_{r-1},v_r).
    \]
    \item \textbf{Deadheading to inspect $\mathcal{R}$:} Transit time for additional traversal of $\mathcal{R}$ 
    to reposition the UAV to cover all required edges.  For an inspection walk $\gamma$ starting at $v_q$, let $D(v_q,\gamma)$ denote this extra deadheading length. (A walk is a sequence of adjacent vertices and edges where vertices or edges could be repeated). The corresponding time is
    \[
    T_{\mathrm{dead}}(v_q,\gamma) = \frac{D(v_q, \gamma)}{v_A}.
    \]
\end{enumerate}

\paragraph{Optimization problem}
The total UAV time is
\[
T_{\mathrm{UAV}}(j,x_A,v_q,\gamma) 
= T_{\mathrm{init}}(x_A,v_q) + T_{\mathrm{inspect}} + T_{\mathrm{dead}}(v_q,\gamma).
\]

To find $v_q$ and $\gamma$, the optimization problem is
\begin{equation}
\label{eq:subprob2-final}
\begin{split}
T_{\text{UAV}}^{\star}(j, x_A, v_q,\gamma) \;=\;
\min_{v_q} \;
\min_{\gamma \in \mathcal{W}(v_q)} 
\Big\{ T_{\mathrm{init}}(x_A,v_q) + \\
+ T_{\mathrm{inspect}} + T_{\mathrm{dead}}(v_q,\gamma) \Big\},
\end{split}
\end{equation}
where $\gamma \in \mathcal{W}(v_q)$ and $\mathcal{W}(v_q)$ is the set of feasible inspection walks on $E_\mathcal{R}$ starting at $v_q$ and covering all required edges $E_{\mathcal{R}}$. 
\begin{figure}[ht]
    \centering
    \begin{tikzpicture}[scale=1.0,>=stealth, node distance=1cm]
        \tikzstyle{vertex}=[circle,draw,fill=skyblue!40,minimum size=6mm,inner sep=0pt]
        
        \node[fill=none,below=10pt] (UAV_label) at (-1,2) {\textbf{UAV}};
        \node[vertex] (UAV) at (-1,2) {$x_A$};
        \node[vertex] (u0)  at (3.5,3) {$v_0$};
        \node[vertex] (u1)  at (2.5,3) {$v_1$};
        \node[vertex] (u2)  at (1,2) {$v_2$};
        \node[vertex] (u3) at (2,1) {$v_3$};
        \node[vertex] (u4) at (3,1) {$v_4$};
        \node[vertex] (u5)  at (4,1.5) {$v_5$};

        \draw[thick,blue,dashed] (UAV) -- (u2);
        \draw[thick,blue] (u0) -- (u1) -- (u2) -- (u3) -- (u4) -- (u5);

        \draw[thick,blue,dotted] (u0) -- (u5);

    \end{tikzpicture}
    \caption{Illustration of the optimal plan for the UAV to inspect path \nn{$\mathcal{R} = \langle v_0, v_1, \dots, v_5 \rangle$}. Dashed lines represent the initial transit path, solid lines represent the required inspection path, and dotted lines represent the deadheading paths.}
    \label{fig:optimal-uav-plan}
\end{figure}

Figure~\ref{fig:optimal-uav-plan} gives an example of the optimal plan for the UAV to inspect the path \nn{$\mathcal{R} = \langle v_0, v_1, \dots, v_5 \rangle$} when the UAV goes to $v_2$ first, inspects two edges $(v_2, v_1), (v_1, v_0)$, deadheads along $(v_0, v_5)$, continues inspection of edges $(v_5, v_4), (v_4, v_3),$ and $(v_3, v_2)$. In this case, the UAV will stop at $v_2$.

\section{UGV-only: Worst-Case Analysis}
\label{sec:analysis-ugv-only}

We now introduce the simplified theoretical model used for the worst-case analysis. The assumptions in this section are introduced for proving competitive ratio bounds for UGV worst-case travel time. These assumptions are not used in the Optimal Partition algorithm or in the experiments on real road networks.

We first consider the case of a UGV alone without UAV support, focusing on graphs where $k$ disjoint paths connect $s$ and $g$. The UGV must travel from $s$ to $g$, as in the original Canadian Traveller Problem. We derive a competitive ratio of $2k-1$.

\begin{figure}[ht]
    \centering
    \begin{tikzpicture}[
        >=Latex,
        every node/.style={circle, draw, inner sep=1.2pt, minimum size=7pt, transform shape},
        path/.style={line width=0.9pt},
        labelnode/.style={rectangle, draw, rounded corners, fill=white, inner sep=2pt, font=\scriptsize, transform shape}
    ]

        \node[fill=black, label=left:{$s$}, minimum size=8pt] (s) at (0,0) {};
        \node[fill=black, label=right:{$g$}, minimum size=8pt] (g) at (6, 0) {};

        \node (p1_1) at (2, -0.5) {};
        \node (p1_2) at (4, -0.5) {};
        \draw[path] (s) -- (p1_1) -- (p1_2) -- (g);
        \node[draw=none, fill=none, font=\scriptsize, above=0pt] at (3, -0.5) {$P_1$};

        \node (p2_1) at (2, 0.5) {};
        \node (p2_2) at (4, 0.5) {};
        \node (p2_3) at (4.5, 0.5) {};
        \draw[path] (s) -- (p2_1) -- (p2_2) -- (p2_3) -- (g);
        \node[draw=none, fill=none, font=\scriptsize, above=0pt] at (3, 0.5) {$P_2$};

        \node (p3_1) at (2, 1.25) {};
        \node (p3_2) at (4, 1.25) {};
        \draw[path] (s) -- (p3_1) -- (p3_2) -- (g);
        \node[draw=none, fill=none, font=\scriptsize, above=0pt] at (3, 1.25) {$P_3$};

        \node (p4_1) at (2, -1.25) {};
        \node (p4_2) at (5, -1.25) {};
        \draw[path] (s) -- (p4_1) -- (p4_2) -- (g);
        \node[draw=none, fill=none, font=\scriptsize, above=0pt] at (3, -1.25) {$P_4$};

        \node[labelnode, above left=3pt and -2pt of s] {UGV};
        \node[fill=black, minimum size=8pt] (s_A) at (4,0.5) {};
        \node[labelnode, below=6pt] at (4, 0.5) {$\text{UAV at } x_{A}$};

    \end{tikzpicture}
    \caption{Illustration of a graph with disjoint paths.}
    \label{fig:disjoint-paths}
\end{figure}

\subsection{Disjoint Paths Setting}
We first focus on the case where $s$ and $g$ are connected by $k$ disjoint paths $ \mathcal{P} = \{P_1, P_2, \dots, P_k\}, \quad P_i \cap P_j = \{s,g\} (\forall i \neq j)$, with paths sorted by their lengths so $L_1 \leq L_2 \leq \cdots \leq L_k$. See Figure~\ref{fig:disjoint-paths}.

\subsection{Adversary Model}
We extend the adversary model from the original Canadian Traveller Problem~\cite{PapadimitriouY91_BasicCTP}. The adversary selects which edges are blocked and where, prior to the start of robot motion, with the objective of maximizing the UGV’s travel cost before reaching $g$. To cause the worst case for $k$ disjoint paths, the adversary blocks the final edge of each of the first $(k-1)$ paths near $g$, forcing the UGV to repeatedly traverse and backtrack on these paths until reaching the $k$th (i.e., longest) path.

\subsection{Competitive Ratio}
Let $\pi$ denote a UGV-only or cooperative UGV--UAV strategy. The total travel time of the UGV under $\pi$ and a given instance $I$ of blocked edges is denoted $T_{UGV}(\pi, I)$. Our goal is to design a strategy that minimizes  $T_{UGV}$. The worst-case performance of $\pi$ over all instances can also be characterized by the \textit{competitive ratio}
\[
\rho(\pi) = \sup_{\{I\}} \frac{T_{UGV}(\pi, I)}{L^*(I)/v_G}
\]
where \(L^*(I)\) is the length of the shortest feasible \(s\)-\(g\) path in instance \(I\).


\subsection{Shortest-First Strategy}
Let $\mathcal{P}=\{P_1,\dots,P_k\}$ be the disjoint $s$--$g$ paths with lengths $L_1\le L_2\le\cdots\le L_k$ and let $S=\sum_{i=1}^k L_i$.

\begin{lemma}
Among all permutations of $\mathcal{P}$, attempting paths in nondecreasing length order $P_1,P_2,\dots,P_k$ minimizes the UGV’s worst-case travel cost.
\end{lemma}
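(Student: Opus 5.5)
Fix an arbitrary attempt order $\sigma=(P_{\sigma(1)},\dots,P_{\sigma(k)})$. We first compute its worst-case cost under the adversary model above, and then show by an exchange argument that this cost is smallest when the order is nondecreasing in length.

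\textbf{Step 1: worst-case cost of a fixed order.} A failed attempt on a path of length $L$ costs at most $2L$. The adversary can approach this bound by placing the blockage arbitrarily close to $g$ on the final edge. Under a fixed order, the adversary's best choice is to block the first $k-1$ attempted paths and leave the last one open. Other patterns are worse for the adversary. Leaving an earlier path open ends the run sooner and only removes failed-attempt terms. Blocking all $k$ paths makes the instance infeasible, so it is excluded. The worst-case cost of $\sigma$ is therefore
\[
C(\sigma)=2\sum_{t=1}^{k-1}L_{\sigma(t)}+L_{\sigma(k)}=2S-L_{\sigma(k)},
\]
taken as a supremum over blockage positions. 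This quantity depends only on which path is attempted last.

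\textbf{Step 2: minimize over orders.} $C(\sigma)$ is minimized exactly when $L_{\sigma(k)}$ is maximal, that is, when $P_k$ is tried last. The order $P_1,\dots,P_k$ puts $P_k$ last, so it achieves the minimum $2S-L_k$.

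\textbf{Step 3: the stronger adjacent-swap argument.} Step 2 shows that shortest-first ties many orders, which is weaker than the lemma may intend. We also want to show that shortest-first is preferable when costs are measured against the optimum $L^*$ at every stopping point. Suppose the adversary may leave open the path attempted at position $t$. The cost incurred up to and including that success is $2\sum_{r<t}L_{\sigma(r)}+L_{\sigma(t)}$. Now consider an order containing an adjacent inversion, $L_{\sigma(t)}>L_{\sigma(t+1)}$, and swap the two paths. Any prefix that ends strictly before position $t$ or strictly after position $t+1$ contains the same set of paths, so its cost is unchanged. For prefixes ending inside the swapped pair, the swap replaces a longer failed path with a shorter one, so the cost does not increase. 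A standard bubble-sort induction then shows that the sorted order is optimal.

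The main obstacle is making the worst-case cost well defined. The adversary's supremum uses blockages approaching $g$, and the cost $2d$ only tends to $2L$ rather than attaining it. A secondary difficulty is settling that blocking exactly the first $k-1$ attempted paths is the adversary's best response for every order. I would handle this by comparing directly against the alternative blocking sets.
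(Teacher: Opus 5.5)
Your Steps 1--2 are correct and are essentially the paper's proof: for any order the adversary blocks the first $k-1$ attempted paths near $g$, forcing cost $2S-L_{\sigma(k)}$, which is minimized by attempting $P_k$ last. Your Step 1 is more careful than the paper about why this is the adversary's best response and about the cost being only a supremum.

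Step 3 is not needed for the lemma. As written, it proves prefix-wise dominance of absolute costs, which is a valid strengthening, but not dominance of costs measured against $L^*$ as you motivate it. Take $X=\sum_{r<t}L_{\sigma(r)}$ and $a=L_{\sigma(t)}>b=L_{\sigma(t+1)}$. The swap lowers the cost of stopping at position $t$ from $2X+a$ to $2X+b$. But when only that path is open, it can raise the ratio from $1+2X/a$ to $1+2X/b$, so a ratio version would need a separate cross-position argument.
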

\begin{proof}
For any order $\sigma$, an adversary can block the first $k-1$ attempted paths near $g$, causing the UGV to backtrack, and leave the last attempted path open, forcing cost
\[
C_\sigma \;=\; 2\sum_{i=1}^{k-1} L_{\sigma(i)} + L_{\sigma(k)} \;=\; 2S - L_{\sigma(k)}.
\]
The factor of $2$ appears because once a blockage is detected, the UGV must return to the start $s$ before attempting another path. Since $S$ is fixed, this quantity is minimized when $L_{\sigma(k)}$ is as large as possible; for any $i<k$,
\[
(2S-L_i)-(2S-L_k)=L_k-L_i\ge0,
\]
so placing $P_k$ last gives the smallest cost. Hence, the shortest-first strategy minimizes worst-case travel.
\end{proof}

We therefore use the shortest-first order $P_1,P_2,\dots,P_k$ for the UGV. An adversary that knows this policy maximizes wasted exploration by blocking the last edge of $P_1,\dots,P_{k-1}$ (as close to $g$ as possible) and leaving $P_k$ open, yielding the worst-case cost
\[
C^{*}\;=\;2\sum_{i=1}^{k-1} L_i \;+\; L_k \;=\; 2S - L_k.
\]

\subsection{Competitive Ratio of Shortest-First Strategy}
In the shortest-first strategy, the UGV attempts $P_1,P_2,\dots,P_k$ in nondecreasing length order. The adversary chooses an instance by deciding which paths to block and, for each blocked path, the location of the blockage. Since the paths are disjoint, attempting a path $P_i$ reveals no information about any other path. Moreover, for any blocked path $P_i$, placing the blockage as close to $g$ as possible maximizes the UGV's wasted travel on that path. Hence, in the worst case, we may assume each blocked path is blocked at its last edge (at a small distance $\epsilon_i$ from $g$).

Let $j\in\{1,\dots,k\}$ be the index of the first open path in the shortest-first order, i.e., $P_1,\dots,P_{j-1}$ are blocked and $P_j$ is open. Then the UGV traverses each blocked path nearly to $g$ and returns to $s$, and finally traverses $P_j$ once to reach $g$. Therefore, the UGV travel time is 
\[
T_{\mathrm{UGV}}(j)
=
\frac{2\sum_{i=1}^{j-1}(L_i-\epsilon_i)+L_j}{v_G}
\;\approx\;
\frac{2\sum_{i=1}^{j-1}L_i + L_j}{v_G}.
\]
The offline optimum, with full knowledge of the environment, takes the shortest open path, which in this instance is $P_j$, with time $L_j/v_G$. Hence, the competitive ratio of shortest-first is
\begin{flalign*}
\rho
&= \sup_{\text{instances}} \frac{T_{\mathrm{UGV}}}{\mathrm{OPT}}
= \max_{1\le j\le k}\frac{T_{\mathrm{UGV}}(j)}{L_j/v_G} &&\\
&\le \max_{1\le j\le k}\frac{2\sum_{i=1}^{j-1}L_i + L_j}{L_j}
= \max_{1\le j\le k}\left(1+2\frac{\sum_{i=1}^{j-1}L_i}{L_j}\right) &&
\end{flalign*}

Because $L_i\le L_j$ for all $i<j$, we have $\sum_{i=1}^{j-1}L_i \le (j-1)L_j$, and thus
\[
\rho
\;\le\;
\max_{1\le j\le k}\bigl(1+2(j-1)\bigr)
=
\max_{1\le j\le k}(2j-1)
=
2k-1.
\]

The bound is tight. Consider a graph with paths of equal-length $L_1=\cdots=L_k=L$ and an adversary that blocks
$P_1,\dots,P_{k-1}$ at their last edges and leaves only $P_k$ open. Then $j=k$ and
\[
\rho
=
\frac{2\sum_{i=1}^{k-1}L_i + L_k}{L_k}
=
\frac{2(k-1)L + L}{L}
=
2k-1.
\]
Therefore, the competitive ratio of the shortest-first strategy is exactly $2k-1$.

\section{Cooperative UGV--UAV: Worst-Case Analysis}
\label{sec:analysis-ugv-uav}

We now analyze the cooperative case in which a UAV scout assists the UGV. The intuition is that while the UGV traverses a prefix of a path, the UAV can simultaneously inspect its suffix to reduce the UGV’s wasted backtracking.

We use the same disjoint-paths and oblivious adversary model introduced in Section~\ref{sec:analysis-ugv-only}. However, here the adversary, knowing the joint strategy of the UGV and UAV, can always place blockages at locations that force the robots to completely inspect each path before it is declared as blocked. Thus, in the worst case, each attempted path $P_i$ must still be fully inspected by the team. We additionally assume, in this section only, that the UAV's initial transit and deadheading costs are negligible in order to obtain a clean analytical bound. These costs are explicitly included later in the Optimal Partition algorithm.


\subsection{Setup}
Let $P_1, \dots, P_k$ be the $k$ disjoint $s$--$g$ paths ordered by length $L_1 \le L_2 \le \cdots \le L_k$. Knowing the UGV--UAV strategy, the adversary can place the blockage so that neither robot can terminate early; i.e., both the prefix (UGV) and suffix (UAV) must be inspected in full.

\subsection{Cost of a Blocked Path}
Consider a blocked path $P_i$ with length $L_i$. The UGV and UAV may divide the inspection in various ways (e.g., prefix/suffix, UAV moving forward from $s$, UAV moving backward from $g$, or UAV starting in the middle). Yet, since the adversary knows the strategy, they can always place the blockage so that the entire path $P_i$ must be inspected before it is declared blocked.
The benefit of cooperation is that the workload is shared: the UGV need not inspect the entire path, as the UAV covers part of it. If the UGV alone inspects $P_i$, the time is $L_i/v_G$. If the UGV and UAV inspect disjoint segments in parallel, the time is $L_i/(v_G+v_A)$, so reducing the UGV’s portion of the cost in proportion to the speed ratio $\tfrac{v_G}{v_G+v_A}$.
Formally, the UGV’s travel distance for blocked path $P_i$ is
$\frac{2v_G}{v_G+v_A} \, L_i,$
compared to $2L_i$ in the UGV-only case. 


\subsection{Total Time Under Shortest-First Cooperation}
If the first open path is $P_j$, then the UGV incurs the above blocked-path cost for $P_1,\dots,P_{j-1}$ and then traverses $P_j$ to reach $g$. Hence, the UGV travel time is
\[
T_{\mathrm{UGV\text{--}UAV}}(j)
\;=\;
\sum_{i=1}^{j-1}\frac{2L_i}{v_G+v_A} \;+\; \frac{L_j}{v_G}
\]

The offline optimum, knowing the instance, directly uses the shortest open path, which in this construction is $P_j$, with time $\mathrm{OPT}(j)=L_j/v_G$.

\subsection{Competitive Ratio}
Using the shortest-first cooperative strategy, the competitive ratio is the maximum over all adversarial instances, or equivalently, over all possible indices $j$ of the first open path:
\begin{flalign*}
\rho_{\mathrm{UGV\text{--}UAV}}
&=
\sup_{\text{instances}}\frac{T_{\mathrm{UGV\text{--}UAV}}}{\mathrm{OPT}}
=
\max_{1\le j\le k}
\frac{T_{\mathrm{UGV\text{--}UAV}}(j)}{L_j/v_G} &&\\
&=
\max_{1\le j\le k}
\left(
\frac{v_G}{L_j}\sum_{i=1}^{j-1}\frac{2L_i}{v_G+v_A}
+
1
\right) &&\\
&\;=\;
\max_{1\le j\le k}
\left(
1+\frac{2v_G}{v_G+v_A}\frac{\sum_{i=1}^{j-1}L_i}{L_j}
\right).
\end{flalign*}
Since $L_i\le L_j$ for all $i<j$, we have $\sum_{i=1}^{j-1}L_i\le (j-1)L_j$, and therefore
\begin{flalign*}
\rho_{\mathrm{UGV\text{--}UAV}}
\;\le\;
\max_{1\le j\le k}\left(1+\frac{2v_G}{v_G+v_A}(j-1)\right) &&\\
=
1+\frac{2v_G}{v_G+v_A}(k-1)
=
2(k-1)\frac{v_G}{v_G+v_A} + 1.
\end{flalign*}
The bound is tight: if $L_1=\cdots=L_k$,  the adversary blocks $P_1,\dots,P_{k-1}$ (forcing full path inspection) and leaves only $P_k$ open, then equality holds.




\begin{theorem}
\label{thm:coop-bound}
In the disjoint-path Canadian Traveller Problem under the simplifying assumption that the UAV’s initial transit and deadheading costs are negligible, if a UAV assists the UGV and adversarially placed blockages force complete inspection of each attempted path, the worst-case competitive ratio is
\[
\rho_{\text{UGV--UAV}} = 2(k-1)\frac{v_G}{v_G+v_A} + 1.
\]
\end{theorem}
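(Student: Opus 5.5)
The plan is to prove the two inequalities separately: an upper bound for the shortest-first cooperative strategy and a matching lower bound from an explicit adversarial instance. Both halves reuse the structure of the UGV-only analysis of Section~\ref{sec:analysis-ugv-only}. Throughout we work under the stated assumptions: paths are disjoint, the UAV's initial transit and deadheading are free, and the adversary forces full inspection of every attempted blocked path.

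\textbf{Per-path cost.} First we bound the UGV's time spent on a single blocked path $P_i$. Suppose the team inspects $P_i$ with the UGV taking a prefix of length $a$ and the UAV the complementary suffix of length $L_i-a$, working in parallel. Under the negligible-transit assumption, the time until $P_i$ is declared blocked is
\[
\max\Bigl\{\tfrac{a}{v_G},\tfrac{L_i-a}{v_A}\Bigr\}.
\]
This is minimized at the balanced split $a=L_iv_G/(v_G+v_A)$, giving $L_i/(v_G+v_A)$. The UGV must then return to $s$ over its prefix, which takes another $a/v_G=L_i/(v_G+v_A)$. So each blocked path costs the UGV $2L_i/(v_G+v_A)$.

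The delicate point is that the blockage must not let the UGV stop early. Because the adversary knows the split, it places the blockage at the very end of the UAV's portion, the last edge adjacent to $g$ as inspected in the UAV's order. The UAV therefore reports only at the balanced completion time, and the UGV has travelled its full prefix by then. I expect this step to be the main obstacle: one must argue that no other division scheme (UAV from $s$, UAV from the middle, non-contiguous pieces) does better in the worst case. I would handle it with an aggregate rate argument. Full inspection requires covering length $L_i$ at a combined rate of at most $v_G+v_A$, so the detection time is at least $L_i/(v_G+v_A)$ for every scheme, and contiguous prefix/suffix splitting attains this bound.

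\textbf{Upper bound.} Next we sum over blocked paths. By the shortest-first Lemma, the paths are attempted in the order $P_1,\dots,P_k$. If $P_j$ is the first open path, the UGV's time is
\[
T(j)=\sum_{i<j}\frac{2L_i}{v_G+v_A}+\frac{L_j}{v_G},
\]
while the offline optimum is $L_j/v_G$, since all shorter paths are blocked. Dividing and using $\sum_{i<j}L_i\le (j-1)L_j$ gives a ratio of at most $1+\frac{2v_G}{v_G+v_A}(j-1)$. Maximizing over $j\le k$ yields the upper bound $2(k-1)\frac{v_G}{v_G+v_A}+1$.

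\textbf{Tightness.} Finally we exhibit an instance achieving the bound. Take $L_1=\cdots=L_k=L$, block $P_1,\dots,P_{k-1}$ at their ends as described above, and leave $P_k$ open. Then $T(k)/(L/v_G)$ equals the bound exactly, which establishes equality and proves the theorem.
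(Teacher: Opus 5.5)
Your upper-bound and tightness steps are the paper's own argument. You charge $2L_i/(v_G+v_A)$ to each blocked path under the balanced prefix/suffix split, add $L_j/v_G$ for the first open path, use $\sum_{i<j}L_i\le (j-1)L_j$, and take equal lengths with $P_1,\dots,P_{k-1}$ blocked for tightness. That chain is correct.

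The step you single out as the main obstacle is not needed, and it is argued incorrectly, so drop it rather than rely on it.

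It is not needed because the theorem, as the paper proves it, gives the competitive ratio of the shortest-first cooperative strategy with the balanced split. Nothing requires that split to be optimal among division schemes.

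It is wrong because you minimize the detection time $\max\{a/v_G,(L_i-a)/v_A\}$. The UGV's cost on a blocked path is detection time plus return time, $\max\{a/v_G,(L_i-a)/v_A\}+a/v_G$. Your aggregate-rate bound $L_i/(v_G+v_A)$ on detection says nothing about this sum. Taking $a=0$, the UGV waits at $s$ while the UAV inspects all of $P_i$, and its cost is $L_i/v_A$. This is strictly less than $2L_i/(v_G+v_A)$ whenever $v_A>v_G$.

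The same idea works at the level of the whole instance. Suppose the UGV waits at $s$ on $P_1,\dots,P_{k-1}$ and drives $P_k$ directly, which must be open if any path is. This strategy has ratio at most $1+(k-1)v_G/v_A$, below the theorem's value for $k\ge 2$ and $v_A>v_G$. So the statement can only be read as a claim about the balanced shortest-first strategy, and the claim that no division scheme does better cannot be proved.

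What you do need, and only state implicitly, is this: under the balanced split both robots finish at time $L_i/(v_G+v_A)$, so the UGV never waits. Hence a blocked path costs it at most $2L_i/(v_G+v_A)$, and the open path costs exactly $L_j/v_G$.
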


\textbf{Remark}: The simplifying assumption is made only to obtain a clean analytical bound. In Section~\ref{sec:partition}, we show how the UAV actively chooses its starting point and inspection plan to minimize initial transit and deadheading costs, to minimize  its travel costs in practice. 

\section{Optimal UAV Inspection of a Path (Subproblem~2)}
\label{sec:partition}
We now consider computing the path for a UAV to inspect a given path suffix in a general graph $G$. Let \nn{$\mathcal{R}=\langle v_0,\dots,v_\ell \rangle$} denote the path suffix for the UAV to inspect. It has a vertex set  $V_{\mathcal{R}} = \{v_0,\dots,v_\ell\}$, required edge set $E_{\mathcal{R}} = \{(v_{r-1},v_r)\}_{r=1}^{\ell}$, and total length $L_{\mathcal{R}}=\sum_{r=1}^{\ell} w(v_{r-1},v_r)$.


The UAV begins at $x_A\in V$, flies on the complete motion graph $G_A$ with edge costs $w_A$, and its total time is
\[
\begin{split}
T_{\mathrm{UAV}}(x_A,v_q,\gamma)
= T_{\mathrm{init}}(x_A,v_q) + T_{\mathrm{inspect}} + T_{\mathrm{dead}}(v_q,\gamma) \\
= \frac{w_A(x_A,v_q)}{v_A}
  + \frac{L_{\mathcal{R}}}{v_A}
  + \frac{D(v_q,\gamma)}{v_A}
\end{split}
\]
where $v_q\in V_{\mathcal{R}}$ is the chosen start vertex on $\mathcal{R}$ and
$D(v_q,\gamma)$ is the \textit{deadheading along $\mathcal{R}$} incurred by the inspection walk $\gamma$ beyond the mandatory length $L_{\mathcal{R}}$ of required edges.

\subsection{$T$-join Formulation for Minimum Deadheading}


To inspect all edges of the required inspection subgraph $H=(V_\mathcal{R}, E_\mathcal{R})$ induced by $\mathcal{R}$, the UAV may need to perform deadheading on edges in the motion graph $G_A$.
This setting is closely related to the Chinese Postman Problem~\cite{Guan1962_CPP,EdmondsJ1973_CPPMatching}, except that our inspection walk is allowed to start and stop at different vertices and the additional edges are  chosen from $G_A$. We compute the minimum deadheading needed to realize such an inspection walk using a minimum-weight $T$-join~\cite{EdmondsJ1973_CPPMatching}.

Let $T \subseteq V$ be a subset of vertices of the graph $G=(V, E, w)$. A \textit{$T$-join} is a subset of edges $J \subseteq E_A$ such that, in the graph $(V, J)$, every vertex in $T$ has odd degree, while all other vertices have even degree. Equivalently, adding the set $J$ flips the degree parity of exactly the vertices in $T$. By carefully choosing $T$, we can augment the required inspection subgraph by forming $H' = (V_{\mathcal R},\, E_{\mathcal R} \cup J)$, where $J$ represents the deadheading edges. The graph $H'$ admits an Euler trail from a predetermined start vertex $v_i$ to a predetermined $v_j$ when its odd-degree vertices are exactly $\{v_i,v_j\}$ (in the special case $v_i=v_j$, all vertices have even degree). Therefore, choosing $T$ to enforce the appropriate parity condition and minimizing deadheading reduces to the minimum-weight $T$-join problem. 

Figure~\ref{fig:modified-T-join} provides an example of the T-join problem when the start vertex is $v_2$ and the stop vertex is $v_\ell$. In the original path $\mathcal{R}$, all interior vertices (i.e., $v_1, v_2$) have even degree, while the endpoints $v_0, v_\ell$ have odd degree. If we want the final Eulerian path to start at $v_2$ and end at $v_\ell$, the degree of all vertices except $v_2$ and $v_\ell$ has to be even. So we need to flip the degree of the two vertices $v_2$ and $v_0$, that is, set $T=\{v_0, v_2\}$. To achieve this optimally, we add a shortest path (i.e., a deadheading edge) between $v_0$ and $v_2$ to the original graph, forming a T-join that enables an Eulerian inspection path.

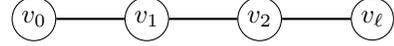
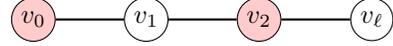
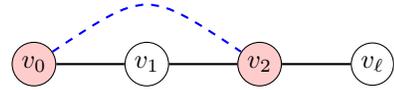
\begin{figure}[!ht]
    \centering
    \begin{subfigure}{0.5\textwidth}
        \centering
        \begin{tikzpicture}[scale=1.0, every node/.style={circle,draw,inner sep=2pt}]
            \node (v0) at (0,0) {$v_0$};
            \node (v1) at (1.5,0) {$v_1$};
            \node (v2) at (3,0) {$v_2$};
            \node (v3) at (4.5,0) {$v_\ell$};
            \draw[thick] (v0)--(v1)--(v2)--(v3);
        \end{tikzpicture}
        \caption{Required path $\mathcal{R}$ with endpoints of odd degree.}
        \label{fig:tjoin_a}
    \end{subfigure}
    \hfill

    \begin{subfigure}{0.45\textwidth}
        \centering
        \begin{tikzpicture}[scale=1.0, every node/.style={circle,draw,inner sep=2pt}]
            \node[fill=red!20] (v0) at (0,0) {$v_0$};
            \node (v1) at (1.5,0) {$v_1$};
            \node[fill=red!20] (v2) at (3,0) {$v_2$};
            \node (v3) at (4.5,0) {$v_\ell$};
            \draw[thick] (v0)--(v1)--(v2)--(v3);
        \end{tikzpicture}
        \caption{The shaded vertices form the set $T = \{v_0, v_2\}$ of vertices whose degree has to be flipped if the UAV starts at $v_2$ and stops at $v_\ell$.}
        \label{fig:tjoin_b}
    \end{subfigure}
    \hfill

    \begin{subfigure}{0.5\textwidth}
        \centering
        \begin{tikzpicture}[scale=1.0, every node/.style={circle,draw,inner sep=2pt}]
            \node[fill=red!20] (v0) at (0,0) {$v_0$};
            \node (v1) at (1.5,0) {$v_1$};
            \node[fill=red!20] (v2) at (3,0) {$v_2$};
            \node (v3) at (4.5,0) {$v_\ell$};
            \draw[thick] (v0)--(v1)--(v2)--(v3);
            \draw[dashed,blue,thick] (v0) .. controls (1.5,1) .. (v2);
        \end{tikzpicture}
        \caption{Adding the edge $J$ (dashed) makes the graph Eulerian.}
        \label{fig:tjoin_c}
    \end{subfigure}

    \begin{subfigure}{0.32\textwidth}
    \end{subfigure}
    \caption{Illustration of the T-join problem to create an Eulerian path when the start and the stop vertices are determined.}
    \label{fig:modified-T-join}
\end{figure}

\subsection{Selecting a Start Vertex for UAV Inspection}
\label{sec:start-vertex}

The UAV's current position influences the start vertex selected in $\mathcal{R}$ for the optimal inspection plan. This affects the initial transit and deadheading costs also. Consider $\mathcal{R}= \langle v_0, v_1, \dots, v_5 \rangle$ in Figure~\ref{fig:uav-3-start-cases}. If the UAV's position is $x_1$, the optimal path will start at $v_2$, and use one deadheading edge $(v_0, v_5)$. But if the UAV's position is $x_2$, the optimal path will start at $v_0$ and stop at $v_5$, without requiring any deadheading edges.

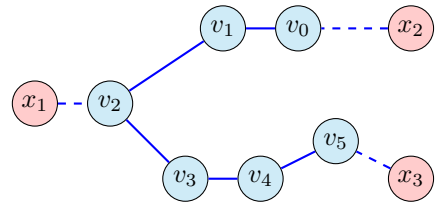
\begin{figure}[H]
    \centering
    \begin{tikzpicture}[scale=1.0,>=stealth, node distance=1cm]
        \tikzstyle{vertex}=[circle,draw,fill=skyblue!40,minimum size=6mm,inner sep=0pt]
        
        \node[vertex,fill=red!20] (x1) at (0,2) {$x_1$};
        \node[vertex,fill=red!20] (x2)  at (5,3) {$x_2$};
        \node[vertex,fill=red!20] (x3)  at (5,1) {$x_3$};
        
        \node[vertex] (u0)  at (3.5,3) {$v_0$};
        \node[vertex] (u1)  at (2.5,3) {$v_1$};
        \node[vertex] (u2)  at (1,2) {$v_2$};
        \node[vertex] (u3) at (2,1) {$v_3$};
        \node[vertex] (u4) at (3,1) {$v_4$};
        \node[vertex] (u5)  at (4,1.5) {$v_5$};

        \draw[thick,blue,dashed] (x1) -- (u2);
        \draw[thick,blue,dashed] (x2) -- (u0);
        \draw[thick,blue,dashed] (x3) -- (u5);
        \draw[thick,blue] (u0) -- (u1) -- (u2) -- (u3) -- (u4) -- (u5);


    \end{tikzpicture}
    \caption{Illustrating the three cases for the UAV's current position.}
    \label{fig:uav-3-start-cases}
\end{figure}

\subsubsection{Case 1: Start at $v_0$}
If the UAV starts at $v_0$, the optimal inspection walk is a single sweep from $v_0$ to $v_\ell$. No additional deadheading edges are required, hence $D(v_0,\gamma) = 0$, and
\[
T_{\mathrm{UAV}}(x_A,v_0,\gamma^\star) = \frac{w_A(x_A,v_0)}{v_A} + \frac{L_{\mathcal{R}}}{v_A}.
\]

\subsubsection{Case 2: Start at $v_\ell$}
Symmetrically, if the UAV starts at $v_\ell$, the optimal walk is a single sweep from $v_\ell$ to $v_0$, yielding $D(v_l,\gamma) = 0$ and
\[
T_{\mathrm{UAV}}(x_A,v_\ell,\gamma^\star) = \frac{w_A(x_A,v_\ell)}{v_A} + \frac{L_{\mathcal{R}}}{v_A}.
\]

\subsubsection{Case 3: Start at an interior vertex $v_i$ ($0<i<\ell$)}
When the UAV starts at an interior vertex $v_i$, this start vertex becomes one of the two odd-degree vertices required for an Eulerian trail. Let us examine the choice of stop vertex:

\begin{itemize}
    \item \textbf{Case 3.1 - Stop at $v_0$:} Here, the set of vertices whose degree must be flipped is $T = \{v_i, v_\ell\}$. The minimum-weight $T$-join corresponds to the shortest path connecting $v_i$ and $v_\ell$, with cost $w_A(v_i, v_\ell)$, which determines the optimal additional deadheading $D(v_0,\gamma)$.
    \item \textbf{Case 3.2 - Stop at $v_\ell$:} Symmetrically, $T = \{v_i, v_0\}$, and the minimum-weight $T$-join is the shortest path connecting $v_i$ and $v_0$ with cost $w_A(v_0, v_i)$.
    \item \textbf{Case 3.3 - Stop at $v_i$ (return to start):} In this case, $T = \{v_0, v_\ell\}$, and the minimum-weight $T$-join is the shortest path connecting the endpoints $v_0$ and $v_\ell$ or or $w_A(v_0, v_\ell)$.
    \item \textbf{Case 3.4 - Stop at any other vertex $v_j \notin \{v_0,v_i,v_\ell\}$:} Then $T = \{v_0,v_\ell,v_i,v_j\}$, and the minimum-weight $T$-join involves at least two edges. By the properties of $T$-joins, its total weight is strictly larger than any of the three previous cases, making such a stop vertex suboptimal.

    Table~\ref{tab:uav_start_i_stop_j} enumerates all possible ways to construct a $T$-join when the UAV starts at an interior vertex $v_i$ and stops at a vertex $v_j \notin \{v_0, v_i, v_\ell\}$, forming the set $T = \{v_0, v_i, v_j, v_\ell\}$. Each case requires adding at least two deadheading edges to pair off the four vertices. However, in every configuration, the total deadheading cost exceeds that of a simpler $T$-join involving only one additional edge, such as connecting $(v_0, v_i)$, $(v_i, v_\ell)$, or $(v_0, v_\ell)$ individually. Therefore, any strategy that stops at $v_j \notin \{v_0, v_i, v_\ell\}$ is suboptimal, and the optimal solution must involve stopping at one of the three special vertices: the same start vertex $v_i$, or one of the endpoints $v_0$ or $v_\ell$.
\end{itemize}

\begin{table}[!ht]
\centering
\caption{All possible ways to add deadheading for the set $T=\{v_0, v_i, v_j, v_\ell\}$.}
\begin{tabular}{lc}
\hline
Deadheading edges & Cost  \\
\hline
$(v_0, v_i), (v_j, v_\ell)$     &       $w_A(v_0, v_i) + w_A(v_j, v_\ell) > w_A(v_0, v_i)$\\
$(v_0, v_j), (v_i, v_\ell)$     &       $w_A(v_0, v_j) + w_A(v_i, v_\ell) > w_A(v_i, v_\ell)$\\
$(v_0, v_\ell), (v_i, v_j)$     &       $w_A(v_0, v_\ell) + w_A(v_i, v_j) > w_A(v_0, v_\ell)$\\
\hline
\end{tabular}
\label{tab:uav_start_i_stop_j}
\end{table}

\section{Optimal Partition Algorithm}
\label{sec:algorithms}

We now present an optimal algorithm to efficiently partition the shortest path in general graphs (Algorithm~\ref{alg:optimal-split}), and analyze its computational complexity.

When the UGV or UAV detects a blockage on the current shortest path, the UGV computes a new shortest path from its current position $x_G$.
We must then compute the optimal split index $j^\star$ that minimizes the maximum of two costs: the UGV inspection time for the prefix and the UAV inspection time for the suffix. That is,
\[
j^\star = \arg\min_{j \in \{0, \dots, m\}} \max\{T_{\mathrm{UGV}}(j),\; T_{\mathrm{UAV}}^{\star}(j)\}.
\]

\begin{algorithm}[H]
\caption{Optimal Partition for UGV-UAV Path Inspection}
\label{alg:optimal-split}
\begin{algorithmic}[1] 
\Require Path $P = \langle u_0, u_1, \dots, u_m \rangle$ with edge weights $w(u_{r-1}, u_r)$
\Require UAV current position $x_A$, speeds $v_G$, $v_A$, and UGV return time $\tau_G^i$
\Ensure Optimal split index $j^\star$
\For{$j = 0$ to $m$}
    \State $S(j) \gets \sum_{r=1}^{j} w(u_{r-1}, u_r)$ \Comment{Prefix length}
    \State $T_{\mathrm{UGV}}(j) \gets \tau_G^i + S(j)/v_G$
    \State Define suffix \nn{$\mathcal{R} = \langle u_j, u_{j+1}, \dots, u_m\rangle$}
    \State Compute $T_{\mathrm{UAV}}^{\star}(j)$ using Eq.~\eqref{eq:subprob2-final}
    \State $T(j) \gets \max(T_{\mathrm{UGV}}(j),\; T_{\mathrm{UAV}}^{\star}(j))$
\EndFor
\State $j^\star \gets \arg\min_j T(j)$
\State \Return $j^\star$
\end{algorithmic}
\end{algorithm}

\subsection{Complexity Analysis}
Let $m$ be the number of edges in the path \nn{$P = \langle u_0, u_1, \dots, u_m\rangle$}, where $P$ is the shortest path from the UGV's current location to the goal vertex $g$.
\begin{itemize}
    \item \textbf{The prefix cost $T_{\mathrm{UGV}}(j)$:} it can be computed in $\mathcal{O}(1)$ time per $j$ using prefix sums, after an initial $\mathcal{O}(m)$ precomputation.
    \item \textbf{The suffix cost $T_{\mathrm{UAV}}^{\star}(j)$:} For each split index \(j\), we must compute the optimal UAV inspection time over the suffix subpath \nn{$\mathcal{R} = \langle u_j, u_{j+1}, \dots, u_m\rangle$}. This reduces to the minimum-weight $T$-join problem, where we search for the best choice of start and stop vertices for the UAV. Specifically, we consider three cases (Section~\ref{sec:start-vertex}):
    \begin{enumerate}
        \item Stopping at $u_j$ (one endpoint),
        \item Stopping at $u_m$ (the other endpoint),
        \item Returning to the same interior start vertex $u_i$ with $j < i < m$.
    \end{enumerate}
    For each case, the deadheading distance is the Euclidean distance, computed in constant time per pair. To compute the best inspection path, we iterate over all candidate starting vertices in the suffix $\mathcal{R}$, of which there are at most $m - j + 1$. Therefore, the total time to compute $T_{\mathrm{UAV}}^{\star}(j)$ is $\mathcal{O}(m - j + 1) = \mathcal{O}(m)$.
\end{itemize}

\textbf{Total cost:} The algorithm evaluates $m+1$ candidate splits. For each split index $j$, computing the prefix cost takes constant time, and computing the suffix cost takes $\mathcal{O}(m)$. Thus, the total runtime is
\[
\sum_{j=0}^{m} \mathcal{O}(m) = \mathcal{O}(m^2).
\] 

\section{Results}
\label{sec:results}
\subsection{Datasets}
We conducted experiments on the Line Coverage \cite{AgarwalA24} dataset, which is derived from OpenStreetMap road networks of the 50 most populous cities worldwide. Each map is represented at two spatial scales: a small area of size 1 km $\times$ 1 km, and a large area of size 3 km $\times$ 3 km.
The 50 small road network graphs have, on average, 334.7 vertices and 371.0 edges, while the 50 large road network graphs have, on average, 1813.4 vertices and 1897.4 edges.

\begin{figure}[ht]
    \centering
    \includegraphics[width=0.8\linewidth]{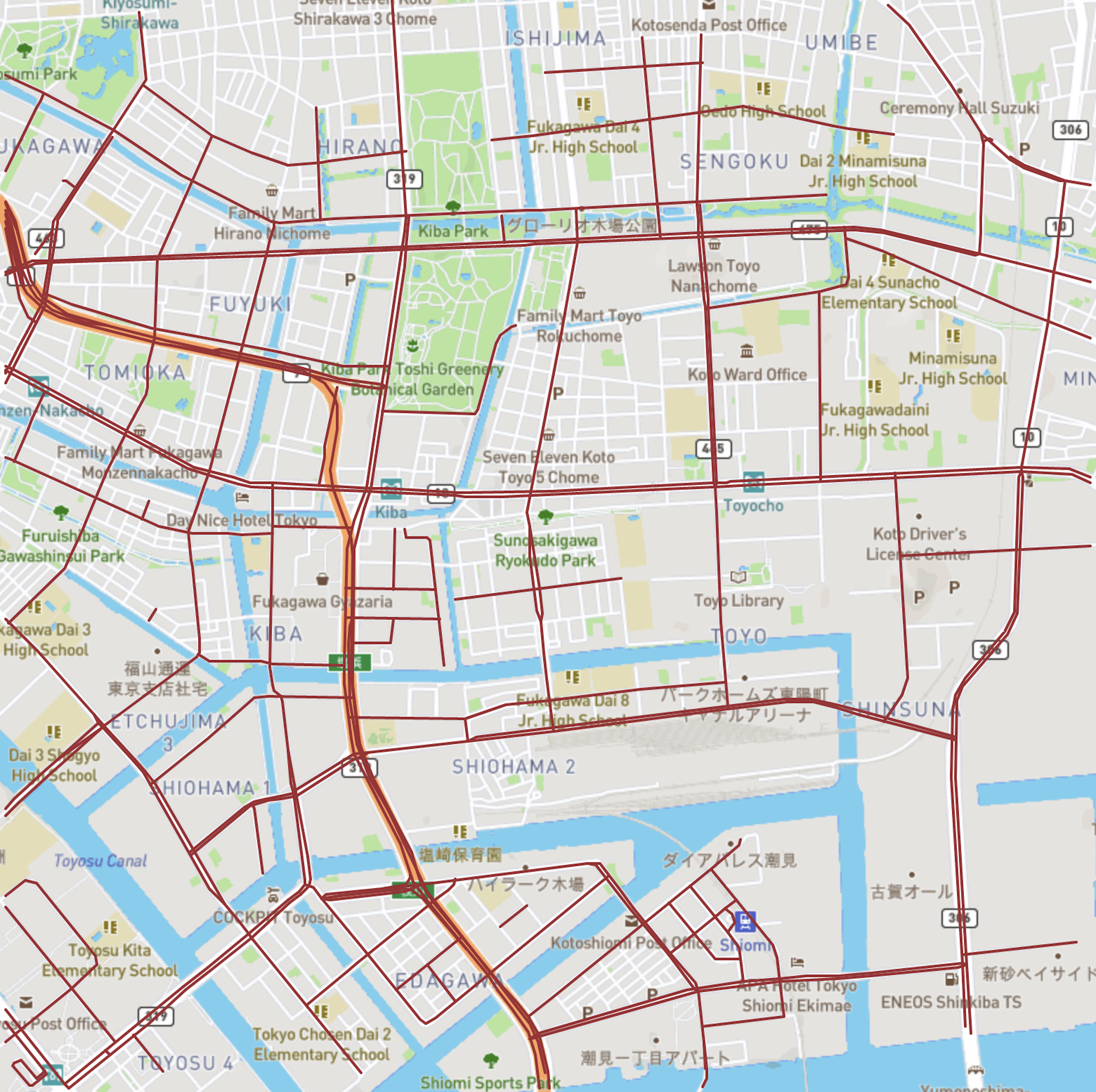}
    \caption{Representative large road network graph from the Tokyo map in the Line Coverage dataset. The graph contains 2,171 vertices and 2,323 edges.}
    \label{fig:large_city_map}
\end{figure}

Figure~\ref{fig:large_city_map} shows a representative large road network graph from Tokyo. For each road network, we generate \(50\) unique instances by randomly selecting start and goal nodes for the UGV, as well as a random start node for the UAV. We ensure that at least one viable path exists between the UGV's start and goal nodes. 
We fix the UGV speed $v_G$ to 20 m/s and evaluate the UAV at three speeds \(v_A\in\{20,30,40\}\) m/s.


For the experiments, we assume that:
\begin{itemize}
    \item For each blocked edge, the damaged point is sampled randomly along the edge to model uncertain road damage locations.
    \item The UGV and UAV can only send or receive information about damaged points when they are at a vertex. 
    \item Path reversals for the UGV and UAV are permitted only at vertices and damage points. Hence, the UAV cannot halt at mid-edge. If it is already traversing an edge when the UGV encounters a damaged point, it must continue to the next vertex, which may require the UGV to wait after it comes back to the most recently visited vertex.
\end{itemize}

\subsection{Strategies}
We compared the following strategies:
\begin{enumerate}
    \item \textbf{Full Observation:} The UGV has perfect knowledge of damaged and safe edges. This case serves as a lower bound for the problem.
    \item \textbf{UGV Only:} The UGV repeatedly computes shortest paths based on its partial knowledge. When it encounters a damaged edge, it returns to the last visited vertex and replans a new path.
    \item \textbf{UAV Bidirectional:} After the UGV selects a new shortest path, the UAV inspects that path backwards from the last uninspected endpoint on the path, thereby assisting in inspecting the UGV's current path.
    \item \textbf{Optimal Partition:} Our proposed strategy partitions the inspection task between the UAV and UGV optimally after the UGV's path is selected.
\end{enumerate}

\subsection{Results}
To provide a representative overview, we report detailed results for 10 major cities with a $v_G$:$v_A$ speed ratio of 20:40. These cities are the two most populous cities from five continents: Tokyo and Delhi (Asia), Lagos and Cairo (Africa), Moscow and Istanbul (Europe), Mexico City and New York (North America), and São Paulo and Buenos Aires (South America). These cities span diverse geographies and road network structures, making them suitable benchmarks for evaluating the performance of strategies.


Tables~\ref{tab:small_results_top} and~\ref{tab:large_results_top} summarize the average UGV traversal times for the selected cities. In nearly all cases, the Optimal Partition strategy outperforms both the UGV-only and UAV Bidirectional strategies. 
The full results for all 50 cities and all speed settings are reported in the supplementary material. 

\begin{table}[h]
\centering
\caption{Average UGV traversal time (in seconds) for ten representative small maps with $v_G$:$v_A$ at 20:40 m/s.}
\begin{tabular}{lcccc}
\hline
 & Full Obs. & UGV-only & Bidir. & Optimal Partition \\
\hline
Tokyo & 41.851 & 57.929 & 53.458 & \textbf{51.458} \\ 
Delhi & 57.040 & 80.874 & 69.492 & \textbf{66.239} \\ 
Lagos & 46.471 & 78.890 & 70.653 & \textbf{55.944} \\ 
Cairo & 43.498 & 67.404 & 54.422 & \textbf{49.539} \\ 
Moscow & 39.719 & 52.212 & 49.359 & \textbf{45.743} \\ 
Istanbul & 49.104 & 82.663 & 78.560 & \textbf{63.820} \\ 
Mexico City & 53.809 & 85.499 & 84.243 & \textbf{69.002} \\ 
New York & 39.116 & 54.794 & 50.185 & \textbf{45.515} \\ 
Sao Paulo & 42.558 & 65.610 & 58.315 & \textbf{51.542} \\ 
Buenos Aires & 49.236 & 69.212 & 63.288 & \textbf{58.399} \\ 

\hline
\end{tabular}
\label{tab:small_results_top}
\end{table}

\begin{table}[h]
\centering
\caption{Average UGV traversal time (in seconds) for ten representative large maps with $v_G$:$v_A$ at 20:40 m/s.}
\begin{tabular}{lcccc}
\hline
 & Full Obs. & UGV-only  & Bidir. & Optimal Partition \\
\hline
Tokyo & 127.992 & 201.025 & 187.344 & \textbf{149.688} \\ 
Delhi & 133.553 & 184.324 & 161.905 & \textbf{151.423} \\ 
Lagos & 155.460 & 206.360 & 180.240 & \textbf{178.965} \\ 
Cairo & 140.656 & 200.970 & 188.795 & \textbf{159.126} \\ 
Moscow & 182.203 & 202.529 & 200.611 & \textbf{200.220} \\ 
Istanbul & 220.729 & 277.356 & 272.065 & \textbf{238.611} \\ 
Mexico City & 153.582 & 220.373 & 220.172 & \textbf{198.041} \\ 
New York & 147.011 & 192.929 & 176.034 & \textbf{162.174} \\ 
Sao Paulo & 136.393 & 186.323 & 167.940 & \textbf{158.571} \\ 
Buenos Aires & 152.827 & 184.308 & 171.515 & \textbf{165.561} \\
\hline
\end{tabular}
\label{tab:large_results_top}
\end{table}


Table~\ref{tab:speed_ratio_improvement} reports the average UGV travel-time reduction of Optimal Partition relative to the UGV-only baseline in all three speed settings in both small and large maps. The results show that Optimal Partition consistently improves UGV travel time across all tested speed settings. The improvement in these general graphs also increases as the UAV becomes faster.
\begin{table}[ht]
\caption{Average UGV travel-time reduction of Optimal Partition relative to UGV-only at different UGV:UAV speed ratios. Here $v_G$ is 20 m/s and \(v_A\in\{20,30,40\}\) m/s.}
\centering
\begin{tabular}{l|ccc}
\hline
Map size & \(20{:}20\) & \(20{:}30\) & \(20{:}40\) \\
\hline
Small maps & 7.05\% & 10.94\% & 12.87\% \\
Large maps & 14.30\% & 16.39\% & 17.97\% \\
\hline
\end{tabular}
\label{tab:speed_ratio_improvement}
\end{table}

\subsection{Limitations and Discussion}
Our implementation assumes that the UGV and UAV exchange blockage information only when the transmitting or receiving robot is at a graph vertex, and that the UAV completes traversal of its current edge before reporting inspection results. These assumptions simplify event handling because the graph does not need to be modified by inserting new vertices at discovered blockage points. However, they can introduce additional waiting time and may reduce the benefit of UAV assistance in some instances. This explains rare cases in which UGV-only slightly outperforms Optimal Partition.

\section{Conclusion and Future Work}
\label{sec:conclusion}

We studied the problem of shortest path planning for a UGV with cooperative inspection by a UAV in an uncertain environment, where some edges in the road network may be blocked. Our formulation extends the original Canadian Traveller Problem by incorporating UAV assistance to reduce the UGV's travel time to reach the goal.
Our analysis for graphs with $k$ disjoint paths shows that the competitive ratio reduces from $2k-1$ for a UGV  to $2(k-1)\frac{v_G}{v_G+v_A} +1$ for a UGV with UAV assistance.
We introduced an optimal partitioning strategy that assigns edge inspection tasks between the UGV and UAV along a candidate path, and presented an efficient algorithm to compute both the optimal split point and the corresponding UAV inspection plan. The algorithm has a time complexity quadratic in the number of path edges, making it practical for online real-time planning.

\medskip
There are many directions for future work. An immediate generalization is handling teams with multiple UAVs and multiple UGVs that may start and finish at different locations. This raises practical questions about how the robots coordinate: who inspects what, and how they share information. We plan to explore distributed or decentralized control strategies.
Incorporating physical constraints such as UAV battery limits, communication range, sensor field of view, collision avoidance, and vehicle dynamics  is an important direction for enabling field deployment.
%

\nn{
\section*{Acknowledgments}
We gratefully acknowledge support from UNC Charlotte.
}

\bibliographystyle{IEEEtran}
\bibliography{refs}

@Article{MunasinghePD24_UAVUGVReview,
    AUTHOR = {Munasinghe, Isuru and Perera, Asanka and Deo, Ravinesh C.},
    TITLE = {A Comprehensive Review of {UAV-UGV} Collaboration: Advancements and Challenges},
    JOURNAL = {Journal of Sensor and Actuator Networks},
    VOLUME = {13},
    YEAR = {2024},
    NUMBER = {6},
    ARTICLE-NUMBER = {81},
    ISSN = {2224-2708},
    DOI = {10.3390/jsan13060081}
}

@INPROCEEDINGS{KashyapGMSD19,
  author={Kashyap, Abhishek and Ghose, Debasish and Menon, Prathyush P. and Sujit, P.B. and Das, Kaushik},
  booktitle={2019 International Conference on Unmanned Aircraft Systems (ICUAS)}, 
  title={{UAV} Aided Dynamic Routing of Resources in a Flood Scenario}, 
  year={2019},
  pages={328-335},
  doi={10.1109/ICUAS.2019.8798357},
  ISSN={2575-7296},
  month=jun,
}

@inproceedings{ChoudhurySKP22_arxiv,
    author = {Choudhury, Shushman and Solovey, Kiril and Kochenderfer, Mykel and Pavone, Marco},
    title = {Coordinated Multi-Agent Pathfinding for Drones and Trucks over Road Networks},
    year = {2022},
    isbn = {9781450392136},
    publisher = {International Foundation for Autonomous Agents and Multiagent Systems},
    address = {Richland, SC},
    booktitle = {Proceedings of the 21st International Conference on Autonomous Agents and Multiagent Systems},
    pages = {272–280},
    numpages = {9},
    location = {Virtual Event, New Zealand},
    series = {AAMAS '22}
}

@article{PapadimitriouY91_BasicCTP,
title = {Shortest paths without a map},
journal = {Theoretical Computer Science},
volume = {84},
number = {1},
pages = {127-150},
year = {1991},
issn = {0304-3975},
doi = {https://doi.org/10.1016/0304-3975(91)90263-2},
author = {Christos H. Papadimitriou and Mihalis Yannakakis},
}

@inproceedings{BarNoyS91_kCTP,
    author = {Bar-Noy, Amotz and Schieber, Baruch},
    title = {The {Canadian Traveller Problem}},
    year = {1991},
    isbn = {0897913760},
    publisher = {Society for Industrial and Applied Mathematics},
    address = {USA},
    booktitle = {Proceedings of the Second Annual ACM-SIAM Symposium on Discrete Algorithms},
    pages = {261–270},
    numpages = {10},
    location = {San Francisco, California, USA},
    series = {SODA '91}
}

@inproceedings{NikolovaK08_StochasticCTP,
    author = {Nikolova, Evdokia and Karger, David R.},
    title = {Route planning under uncertainty: the {Canadian Traveller Problem}},
    year = {2008},
    isbn = {9781577353683},
    publisher = {AAAI Press},
    booktitle = {Proceedings of the 23rd National Conference on Artificial Intelligence - Volume 2},
    pages = {969–974},
    numpages = {6},
    location = {Chicago, Illinois},
    series = {AAAI'08}
}

@article{BnayaFFMS21_RepeatedCTP,
    title={Repeated-Task {Canadian Traveler Problem}},
    volume={2},
    DOI={10.1609/socs.v2i1.18197},
    number={1},
    journal={Proceedings of the International Symposium on Combinatorial Search}, 
    author={Bnaya, Zahy and Felner, Ariel and Fried, Dror and Maksin, Olga and Shimony, Solomon},
    year={2021},
    month={Aug.},
    pages={24-30}
}

@inproceedings{BnayaFS09_RemoteSensing,
    author = {Bnaya, Zahy and Felner, Ariel and Shimony, Solomon Eyal},
    title = {{Canadian Traveler Problem} with remote sensing},
    year = {2009},
    publisher = {Morgan Kaufmann Publishers Inc.},
    address = {San Francisco, CA, USA},
    booktitle = {Proceedings of the 21st International Joint Conference on Artificial Intelligence},
    pages = {437–442},
    numpages = {6},
    location = {Pasadena, California, USA},
    series = {IJCAI'09}
}

@ARTICLE{XuHSZZ09_CompetitiveAnalysis,
    title = {The {Canadian Traveller Problem} and its competitive analysis},
    author = {Xu, Yinfeng and Hu, Maolin and Su, Bing and Zhu, Binhai and Zhu, Zhijun},
    year = {2009},
    journal = {Journal of Combinatorial Optimization},
    volume = {18},
    number = {2},
    pages = {195-205},
}

@article{Westphal08_kCTPAnalysis,
title = {A note on the k-{Canadian Traveller Problem}},
journal = {Information Processing Letters},
volume = {106},
number = {3},
pages = {87-89},
year = {2008},
issn = {0020-0190},
doi = {https://doi.org/10.1016/j.ipl.2007.10.004},
author = {Stephan Westphal},
}

@InProceedings{BampisEX22_CTPPredictions,
    author="Bampis, Evripidis
    and Escoffier, Bruno
    and Xefteris, Michalis",
    editor="Chalermsook, Parinya
    and Laekhanukit, Bundit",
    title="{Canadian Traveller Problem} with Predictions",
    booktitle="Approximation and Online Algorithms",
    year="2022",
    publisher="Springer International Publishing",
    address="Cham",
    pages="116--133",
    isbn="978-3-031-18367-6"
}

@article{PolychronopoulosT96_SSPPR,
    author = {Polychronopoulos, George H. and Tsitsiklis, John N.},
    title = {Stochastic shortest path problems with recourse},
    journal = {Networks},
    volume = {27},
    number = {2},
    pages = {133-143},
    doi = {https://doi.org/10.1002/(SICI)1097-0037(199603)27:2<133::AID-NET5>3.0.CO;2-L},
    year = {1996}
}

@incollection{DemetrescuEGI10,
    author = {Demetrescu, Camil and Eppstein, David and Galil, Zvi and Italiano, Giuseppe F.},
    title = {Dynamic graph algorithms},
    booktitle = {Algorithms and Theory of Computation Handbook: General Concepts and Techniques},
    edition = {2},
    publisher = {Chapman \& Hall/CRC},
    year = {2010},
    isbn = {9781584888222},
}

@article{HanauerHS22,
    author = {Hanauer, Kathrin and Henzinger, Monika and Schulz, Christian},
    title = {Recent Advances in Fully Dynamic Graph Algorithms – A Quick Reference Guide},
    year = {2022},
    issue_date = {December 2022},
    publisher = {Association for Computing Machinery},
    address = {New York, NY, USA},
    volume = {27},
    issn = {1084-6654},
    doi = {10.1145/3555806},
    journal = {ACM J. Exp. Algorithmics},
    month = dec,
    articleno = {1.11},
    numpages = {45}
}

@article{FriedSBW13_CTPComplexity,
title = {Complexity of {Canadian Traveler Problem} variants},
journal = {Theoretical Computer Science},
volume = {487},
pages = {1-16},
year = {2013},
issn = {0304-3975},
doi = {https://doi.org/10.1016/j.tcs.2013.03.016},
author = {Dror Fried and Solomon Eyal Shimony and Amit Benbassat and Cenny Wenner}
}

@article{BenderW12_OptimalRandomkCTP,
    author = {Bender, Marco and Westphal, Stephan},
    title = {An optimal randomized online algorithm for the k-{Canadian Traveller Problem} on node-disjoint paths},
    year = {2015},
    issue_date = {July      2015},
    publisher = {Springer-Verlag},
    address = {Berlin, Heidelberg},
    volume = {30},
    number = {1},
    issn = {1382-6905},
    doi = {10.1007/s10878-013-9634-8},
    journal = {J. Comb. Optim.},
    month = jul,
    pages = {87–96},
    numpages = {10}
}

@ARTICLE{BhadoriyaRDCM24,
    author={Abhay Singh Bhadoriya and Sivakumar Rathinam and Swaroop Darbha and David W. Casbeer and Satyanarayana G. Manyam},
    title={Assisted Path Planning for a {UGV-UAV} Team Through a Stochastic Network},
    journal={Journal of the Indian Institute of Science}, 
    year={2024},
    volume={104},
    pages={691-710},
}

@book{CorberanL14_ArcRouting,
    title   =       "Arc Routing: Problems, Methods, and Applications",
    editor  =       "Angel Corberan and Gilbert Laporte",
    publisher =     "Society for Industrial and Applied Mathematics (SIAM)", 
    address =	"Philadelphia, PA",
    year    =       2014
}

@article{AgarwalA24,
    author  =       "Saurav Agarwal and Srinivas Akella",
    title   =       "{Line Coverage} with Multiple Robots: Algorithms and Experiments",
    journal =       "IEEE Transactions on Robotics",
    volume  =       "40",
    year    =       2024,
    pages	=	"1664--1683",
    doi = {10.1109/TRO.2024.3355802},
}

@article{Guan1962_CPP,
    author    = {Meigu Guan},
    title     = {Graphic programming using odd or even points},
    journal   = {Chinese Mathematics},
    volume    = {1},
    number    = {2},
    pages     = {273--277},
    year      = {1960}
}

@article{EdmondsJ1973_CPPMatching,
    author    = {Jack Edmonds and Ellis L. Johnson},
    title     = {Matching, {Euler Tours} and the {Chinese Postman}},
    journal   = {Mathematical Programming},
    volume    = {5},
    number    = {1},
    pages     = {88--124},
    year      = {1973},
    publisher = {Springer}
}

@inproceedings{NguyenA26,
    author = "Ninh Nguyen and Srinivas Akella",
    title = "Dynamic {UGV-UAV} Cooperative Path Planning in Uncertain Environments",
    booktitle = "IEEE International Conference on Robotics and Automation",
    month = jun,
    year = "2026",
    address = "Vienna, Austria",
    eprint        = {2604.25267},
    archivePrefix = {arXiv},
    primaryClass  = {cs.RO}
}

\end{document}